\newcommand{\norm}[1]{\left\lVert#1\right\rVert}
\newcommand{\rf}[1]{(\ref{#1})}
\def\lnfrac#1#2{\raise.7ex \hbox{\Small $#1$}
  \kern-.15em/\kern-.15em  \lower.2ex \hbox{\Small $#2$}}
\spnewtheorem{assumption}{Assumption}{\bf}{\it}
\numberwithin{equation}{section}
\begin{document}

\title{Power-law Dynamic arising from machine learning
}


\author{Wei Chen$^{1,\dag}$, Weitao Du$^{2,\dag,*}$, Zhi-Ming Ma$^{3,\dag,*}$, Qi Meng$^{1,\dag}$
}


\institute{
$^1$ Microsoft Research Asia, wche@microsoft.com and Qi.Meng@microsoft.com; $^2$
University of Science and Technology of China,
duweitao@mail.ustc.edu.cn; $^3$ University of Chinese Academy of Sciences, mazm@amt.ac.cn \\ $\dag$: Alphabetical order
\\ *: Corresponding authors
}


\maketitle

\begin{abstract}
We study a kind of new SDE that was arisen  from the research on optimization in machine learning, we call it power-law dynamic because its stationary distribution  cannot have sub-Gaussian tail and obeys power-law. We prove that the power-law dynamic is ergodic with unique stationary distribution, provided the learning rate is small enough. We investigate its first exist time. In particular, we compare the exit times of the (continuous) power-law dynamic and its discretization. The comparison can help guide machine learning algorithm.

\keywords{machine learning\and stochastic gradient descent \and stochastic differential equation\and power-law dynamic}
\end{abstract}

\section{Introduction}

In the past ten years, we have witnessed the rapid development of deep machine learning technology. We successfully train deep neural networks (DNN) and achieve big breakthroughs in AI tasks, such as computer vision \cite{he2015delving,he2016deep,krizhevsky2012imagenet}, speech recognition \cite{oord2016wavenet,ren2019fastspeech,shen2018natural} and natural language processing \cite{he2016dual,sundermeyer2012lstm,vaswani2017attention}, etc.

Stochastic gradient descent (SGD) is a mainstream optimization algorithm in deep machine learning. Specifically, in each iteration, SGD randomly sample a mini batch of data and update the model by the stochastic gradient. For large DNN models, the gradient computation over each instance is costly. Thus, compared to gradient descent which updates the model by the gradient over the full batch data, SGD can train DNN much more efficiently. In addition, the gradient noise may help SGD to escape from local minima of the non-convex optimization landscape.

Researchers are investigating how the noise in SGD influences the optimization and generalization of deep learning. Recently, more and more work take SGD as the numerical discretization of the  stochastic differential equations (SDE) and investigate the dynamic behaviors of SGD by analyzing the SDE, including
the convergence rate \cite{he2018differential,li2017stochastic,rakhlin2012making},  the first exit time \cite{gurbuzbalaban2020heavy,meng22020dynamic,wu2018sgd,xie2020diffusion}, the PAC-Bayes generalization bound
\cite{he2019control,mou2017generalization,smith2017bayesian} and the optimal hyper-parameters \cite{he2019control,li2017stochastic}. Most of the results in this research line are derived from the dynamic  with state-independent noise, assuming that
the diffusion coefficient of SDE is a constant matrix independent with the state (i.e., model parameters in DNN). However, the covariance of the gradient noise in SGD does depend on the model parameters.

In our recent work \cite{meng22020dynamic,meng2020dynamic}, we studied the dynamic behavior of SGD with state-dependent noise.
We found that the covariance of the gradient noise of SGD in
the local region of local minima can be well approximated by a  quadratic function of the state. Then, we proposed to investigate the dynamic behavior of SGD by a stochastic differential equation (SDE) with a quadratic state-dependent diffusion coefficient. As shown in \cite{meng22020dynamic,meng2020dynamic}, the new SDE with quadratic diffusion coefficient can better matches the behavior of SGD compared with the SDE with constant diffusion coefficient.

In this paper, we study some mathematical properties of the new SDE with quadratic diffusion coefficient. After briefly introducing its machine learning background and investigating its preliminary properties (Section 2),  we show in Section 3 that the stationary distribution of this new SDE is a power-law distribution ( hence we call the corresponding dynamic a \emph{power-law dynamic} ), and the distribution possesses heavy-tailed property, which means that it cannot have sub-Gaussian tail. Employing coupling method, in Section 4 we prove that the power-law dynamic is ergodic with unique stationary distribution, provided the learning rate is small enough. In the last two sections we analyze the first exit time of the power-law dynamic. We obtain an  asymptotic order of the first exit time in Section 5, we then in Section 6 compare the exit times of the (continuous) power-law dynamic and its discretization. The comparison can help guide machine learning algorithm.

\section{Background and preliminaries on power-law dynamic}
\subsection{Background in Machine Learning}

Suppose that we have training data $S_n=\{(x_1,y_1),\cdots,(x_n,y_n)\}$ with inputs $\{x_j\}_{j=1}^n\in\mathbb{R}^{d_1\times n}$ and outputs $\{y_j\}_{j=1}^n\in\mathbb{R}^{d_2\times n}$. For a model $f_w(x):\mathbb{R}^{d_1}\rightarrow\mathbb{R}^{d_2}$ with parameter (vector) $w \in\mathbb{R}^{d}$, its loss over the training instance $(x_j,y_j)$ is $l(f_w(x_j),y_j)$, where $\ell(\cdot,\cdot)$ is the loss function. In machine learning, we are minimizing the empirical loss over the training data, i.e.,
\begin{align}\label{eq:2.1}
    \min_wL(w):=\frac{1}{n}\sum_{j=1}^n\ell(f_w(x_j),y_j).
\end{align} Stochastic gradient descent (SGD) and its variants are the mainstream approaches to minimize $L(w)$. In SGD, the update rule at the $k$-th iteration is
\begin{align}\label{eq:2.2}
    w_{k+1}=w_k-\eta\cdot \tilde{g}(w_k),
\end{align}
where $\eta$ denotes the learning rate,
\begin{align}\label{eq:2.3}
\tilde{g}(w):=\frac{1}{b}\sum_{j\in S_b}\nabla_w\ell(f_w(x_j,y_j))
\end{align}
 is the stochastic gradient, with $S_b$ being a random sampled subset of $S_n$ with size $b:=|S_b|$. In the literature, $S_b$ is called mini-batch.

We know that $\tilde{g}(w)$ is an unbiased estimator of the full gradient $\nabla L(w)$. The gap between the full gradient and the stochastic gradient, i.e.,
\begin{align}\label{eq:2.4}
R(w):= \nabla L(w)-\tilde{g}(w),
\end{align}
is called the gradient noise in SGD. In the literature, e.g.  \cite{li2017stochastic,meng22020dynamic,xie2020diffusion}, the gradient noise $R(w)$ is assumed to be drawn from Gaussian distribution  \footnote{Under mild conditions the assumption is approximately satisfied by the Central Limit Theorem.},that is, $R(w)\sim\mathcal{N}(0,\Sigma(R(w)))$, where $\Sigma(R(w))$ is the covariance matrix of $R(w).$  Denote $\Sigma(R(w))$ by $C(w),$ the update rule of SGD in Eq.(\ref{eq:2.2}) is then approximated by:
\begin{align}\label{eq:2.4}
    w_{k+1}=w_k-\eta\nabla L(w_k)+ \eta \xi_k,~~ \xi_k \sim\mathcal{N}(0, C(w_k)).
\end{align}
Further, for small enough learning rate $\eta$, Eq.(\ref{eq:2.4}) can be viewed
as the numerical discretization of the following stochastic differential equation (SDE) \cite{he2018differential,li2017stochastic,meng22020dynamic},
\begin{align}\label{eq:2.5}
    dw_t=-\nabla L(w_t)dt+\sqrt{\eta C(w_t)} dB_t,
\end{align}
where $B_t$ is the standard Brownian Motion in $\mathbb{R}^{d}$. This viewpoint enable the researchers to investigate the dynamic properties of SGD
by means of stochastic analysis. In this line, recent
 work studied the dynamic of SGD with the help of SDE. However,
most of the quantitative results in this line of work were obtained for the dynamics with state-independent noise.
 More precisely, the authors assumed that the covariance
 $C(w_t)$ in Eq.(\ref{eq:2.5}) is a constant matrix independent with the state $w_t$.  This assumption of constant diffusion coefficient simplifies the calculation and the corresponding analysis. But it is over simplified because the noise covariance in SGD does depend on the model parameters.

 In our recent work \cite{meng22020dynamic,meng2020dynamic}, we studied the dynamic behavior of SGD with state-dependent noise.
 The theoritical conduction and empirical observations of our research show that the covariance of the gradient noise of SGD in
the local region of  local minima can be well  approximated by a quadratic function of the state $w_t$ as briefly reviewed below.

Let $w^*$ be a local minimum of the (empirical/training) loss function defined in \rf{eq:2.1}. We assume that
the loss function in the local region of $w^*$ can be approximated by the second-order
Taylor expansion as
\begin{align}\label{eq:2.7}
    L(w)=L(w^*) + \nabla_wL(w^*)(w-w^*) +\frac{1}{2}(w-w^*)^TH(w-w^*),
\end{align}
where H is the Hessian matrix of loss at $w^*.$  Since $\nabla_wL(w^*)=0$ at the local minimum $w^*,$  (\ref {eq:2.7}) is reduced to
\begin{align}\label{eq:2.8}
    L(w)=L(w^*) +\frac{1}{2}(w-w^*)^TH(w-w^*),
\end{align}
Under the above setting, the full gradient of training loss is
\begin{align}\label{eq:2.8}
\nabla L(w)= H(w-w^*),
\end{align}
and the stochastic gradient (\ref {eq:2.3}) is
\begin{align}\label{eq:2.9}
\tilde{g}(w):= \tilde{g}(w^*) + \tilde{H}(w-w^*)
\end{align}
where $\tilde{g}(\cdot)$ and $\tilde{H}(\cdot)$ are the gradient and Hessian calculated by the minibatch. More explicitly, the $i$-th component of $\tilde{g}(w)$ is
\begin{align}\label{eq:2.?}
\tilde{g}_i(w) =  \tilde{g}_i(w^*) + \sum_{a=1}^d \tilde{H}_{ia}(w_a-w^*_a).
\end{align}

Assuming that $Cov(\tilde{g}_i(w^*),\tilde{H}_{jk})=0$ for
~$ i,j,k\in \{1,...,d\}$, \footnote{This assumption holds for additive noise case and the squared loss \cite{wei2020implicit}. Specifically, for $\ell(w)=(y-f_w(x))^2$, the gradient and Hessian are $g(w)=2f'_w(x)(y-f_w(x))$ and $H(w)=2(f'_w(x))^2+2f''_w(x)(y-f_w(w))\approx2(f'_w(x))^2$ . With additive noise, we have $y=f_{w^*}(x)+\epsilon$ where $\epsilon$ is white noise independent with the input $x$. Then, $g(w^*)=2f'_w(x)\epsilon$, $H(w^*)\approx 2(f'_{w^*}(x))^2$, and we have $Cov(\tilde{g}_i(w^*),\tilde{H}_{jk})\approx0$.} we have

\begin{align}\label{eq:2.?}
C(w)_{ij}=Cov (\tilde{g}_i(w), \tilde{g}_j(w))=\Sigma_{ij}+(w-w^*)^TA^{ij}(w-w^*),
\end{align}
where $\Sigma_{ij}=Cov(\tilde{g}_i(w^*),\tilde{g}_j(w^*) ),$ ~~ $A^{ij}$ is a $d \times d$ matrix with elements $A^{ij}_{ab}= Cov(\tilde{H}_{ia},\tilde{H}_{jb}).$

Thus, we can convert $C(w)$ into an analytic tractable form as follows.
\begin{align}\label{new0}
C(w)=\Sigma_g(I +(w-w^*)^T\Sigma_H(w-w^*))
\end{align}
where $\Sigma_g$ and $\Sigma_H$ are positive definite matrix. The empirical observations in \cite{meng22020dynamic,meng2020dynamic} is consistent with the covariance structure
\rf{new0}.  Thus the SDE \rf{eq:2.5} takes the form
\begin{equation} \label{full}
dw_t = - H(w_t - w^*)dt + \sqrt{\eta C(w_t)}dB_t,
\end{equation}
where $C(w)$ is given by \rf{new0}. We call the dynamic driven by
\rf{full} a  \emph{power-law dynamic} because its stationary distribution obeys power-law  (see Theorem \ref{thm:sta} below). As shown in \cite{meng22020dynamic,meng2020dynamic}, power-law dynamic can better match the behavior of SGD compared to the SDE with constant diffusion coefficient.

\subsection{Preliminaries on power-law dynamic}

For the power-law dynamic \rf{full},
the infinitesimal generator exists and has the following form:
$$\mathcal{A}= \sum_i\sum_j\frac{\eta}{2}(\Sigma_g)_{ij}(1 + w^T\Sigma_H w)\frac{\partial}{\partial w^i}\frac{\partial}{\partial w^j} - \sum_i H_{ij}w^j \frac{\partial}{\partial w^i}.$$
We will use the infinitesimal generator to specify a coupling in the subsequent sections.
Write $v_t = w_t - w^*$, then
\begin{equation} \label{new1}
dv_t = - H v_t dt + \sqrt{\eta C(v_t)}dB_t,
\end{equation}
where $C(v) = \Sigma_g (1+ v^T \Sigma_H v)$ (Comparing with \rf{full}, here we slightly abused the notation $C$).

In machine learning, we often assume that the dynamic in (\ref{new1}) can be decoupled \cite{meng2020dynamic,xie2020diffusion,zhang2019algorithmic}. More explicitly, we assume that $\Sigma_g,$  $\Sigma_H$ and $H$ are codiagonalizable by an orthogonal matrix $Q$, then under the affine transformation $v_t = Q( w_t - w^*_t)$, (\ref{new1}) is decoupled and can be written as
\begin{equation}\label{sim}dv_t = -h_i v_t^i dt + \sqrt{\eta \sigma_i + \eta \rho_i (v_t^i)^2}dB^i_t,\ \ i \in \{1,\dots,d\},\end{equation}
where $\sigma_i$, $\rho_i$ are positive constants.\footnote{This decoupling property is empirically observed in machine learning, i.e., the directions of eigenvectors of the hessian matrix and the gradient covariance matrix are often nearly coincide at the minimum \cite{xie2020diffusion}. An explanation of this  phenomenon  is that under expectations the Hessian equals to Fisher information \cite{jastrzkebski2017three,xie2020diffusion,zhu2019anisotropic}. }

Following the convention of probabilistic literature, in what follows we shall write
\begin{equation} \label{new2}
 \mu(v_t)= - Hv_t
\end{equation}
\begin{equation}\label{sim3}
\sigma^2 (v_t)= \eta C(v_t).
\end{equation}

 Suppose $x,y \in \mathbb{R}^d$ , by the mean value theorem, we have the following inequality,
\begin{equation} \label{in}
|\sqrt{a+bx^2} - \sqrt{a+by^2}| \leq \sqrt{b}|x-y|.
\end{equation}
Then, it is easy to check that that both $\mu(\cdot)$ and $\sigma (\cdot)$ are local Lipschitz and
have linear growth.
Therefore, by standard theory of stochastic differential equations, the SDE  (\ref{new1}) has a unique strong solution $v(t),$  which has continuous paths and possesses strong Markov property.

Consider the decoupled dynamic in \rf{sim},  we use the fact that as $|x_i| \rightarrow \infty$,
$$|\frac{\mu_i(x_i)}{\sigma_i^2 (x_i)}| = |\frac{h_i x_i}{\eta \sigma_i + \eta \rho_i x_i^2}| \sim O(\frac{1}{|x_i|}).$$
Then, for any fixed $x_0,$ we have
$$\int_{-\infty}^{x_0} \exp (-\int_{x_0}^{x} \frac{2\mu_i(s)}{\sigma_i^2 (s)}ds)(\int_{x}^{x_0}\frac{\exp(\int_{x_0}^y \frac{2\mu_i(s)}{\sigma_i^2 (s)}ds)}{\sigma_i^2 (y)}dy)dx = \infty,$$
and
$$\int_{x_0}^{\infty} \exp (-\int_{x_0}^{x} \frac{2\mu(s)}{\sigma^2 (s)}ds)(\int_{x_0}^{x}\frac{\exp(\int_{x_0}^y \frac{2\mu(s)}{\sigma^2 (s)}ds)}{\sigma^2 (y)}dy)dx = \infty,$$
which implies that each component of $v_t$ will not blow up in finite time.

To conclude, the stochastic differential equation (\ref{sim}) admits a unique strong solution $v(t),$  which has continuous paths and will not blow up in finite time. In subsequent sections we shall study more properties of the dynamic $v(t).$

\section{ Property of the stationary distribution}\label{St}

In this section, we show  that the stationary distribution of the SDE \rf{new1} possesses heavy-tailed property, and its decoupled form is a product of power-law distributions. The existence and  uniqueness of the stationary distribution will be given in the next section.

Let $Q$ be an orthogonal matrix such that $H' = QHQ^T$ is a diagonal matrix. Then
\begin{align}\label{new4}
d (Q v_t) = - H'Qv_tdt + \sqrt{1+(Qv_t)^T \tilde{\Sigma}_HQv_t}\cdot \sqrt{\eta\tilde{\Sigma}_g}d\tilde{B}_t,
\end{align}
where $\tilde{\Sigma}_H = Q\Sigma_H Q^T$, $\tilde{\Sigma}_g = Q\Sigma_g Q^T.$  Note that  $\tilde{B}_t = Q B_t$ is still a Brownian motion. \rf{new4} is just the power-law dynamic  \rf{new1} under a new orthogonal coordinate system, so we will abuse the notation and denote the transformed dynamic by $v_t$ as well.
Since we care about the tail behavior of the power-law dynamic, we show first  that $v_t$ does not have finite higher moments as $t \rightarrow \infty$. This implies that $v_t$ cannot have exponential decay on the tail.

\begin{theorem} \label{thm:sta}
(i) We can find $m \ge 2$ such that the moments  of the power-law dynamic \rf{new4} of order greater than m will explode as the time $t \rightarrow \infty$. \\
(ii) For the decoupled case in \rf{sim}, the probabilistic density of the stationary distribution is a product of power-law distributions (the terminology follows from \cite{zhou}) as below:
\begin{align}\label{power}
p(x) = \frac{1}{Z} \prod_{i=1}^{d}(1+ \frac{\rho_i}{\sigma_i}x^2)^{\kappa_i},
\end{align}
where $\kappa_i = -\frac{\eta \rho_i + h_i}{\eta \rho_i}$ and $Z$ is the normalization constant.
\end{theorem}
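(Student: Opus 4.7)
The plan is to handle (ii) first by an explicit stationary Fokker--Planck computation on each decoupled coordinate, and then to deduce (i) by an Itô-based moment-explosion argument applied to a suitable coordinate of $v_t$. The two parts are only loosely related: part (ii) describes the stationary law, whereas part (i) requires a time-evolution statement, so I would not try to derive (i) from (ii) directly.

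For part (ii), the SDE \rf{sim} is driven by independent Brownian motions in each coordinate, hence the components of $v_t$ are independent one-dimensional diffusions and any stationary density factors as a product. In each coordinate, with drift $\mu_i(x) = -h_i x$ and diffusion squared $\sigma_i^2(x) = \eta\sigma_i + \eta\rho_i x^2$, the stationary Fokker--Planck equation $\tfrac{d}{dx}[\mu_i p_i] = \tfrac{1}{2}\tfrac{d^2}{dx^2}[\sigma_i^2 p_i]$ integrates once (the probability flux at $\pm\infty$ vanishes by integrability) to the classical representation
$$p_i(x) \;=\; \frac{C_i}{\sigma_i^2(x)}\exp\!\Bigl(\int_0^x \frac{2\mu_i(y)}{\sigma_i^2(y)}\,dy\Bigr).$$
The substitution $u = \eta\sigma_i + \eta\rho_i y^2$ reduces the integral to $-\tfrac{h_i}{\eta\rho_i}\ln(\eta\sigma_i+\eta\rho_i x^2)$; absorbing constants, dividing by $\sigma_i^2(x)$, and factoring out $\eta\sigma_i$ gives $p_i(x) \propto (1+\tfrac{\rho_i}{\sigma_i}x^2)^{\kappa_i}$ with the claimed exponent $\kappa_i = -(\eta\rho_i+h_i)/(\eta\rho_i)$. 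Normalizability is automatic because $\kappa_i < -1/2$ (since $h_i,\rho_i,\eta>0$), and taking the product over $i$ with the joint normalization $Z$ yields \rf{power}.

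For part (i), I would argue directly via Itô's formula on $f(x)=x^{2k}$ applied to a single coordinate $v_t^i$. The generator acts as
$$\mathcal{A}f(v) \;=\; \bigl[k(2k-1)\eta\rho_i - 2kh_i\bigr]v^{2k} \;+\; k(2k-1)\eta\sigma_i\,v^{2k-2}.$$
Localizing by $\tau_N = \inf\{t : |v_t| \ge N\}$ (needed because $f$ is unbounded), Dynkin's formula yields an integral identity for $g_N(t) := \mathbb{E}f(v_{t\wedge\tau_N}^i)$; choosing $k$ large enough that the leading coefficient $a_k := k(2k-1)\eta\rho_i - 2kh_i$ is strictly positive (equivalently, $2k > 1 + 2h_i/(\eta\rho_i)$), one obtains a Gronwall lower bound of the form $g_N(t) \ge e^{a_k t}g_N(0)$ after dropping the nonnegative $v^{2k-2}$ term. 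Letting $N\to\infty$ via monotone convergence (which applies because $v_t$ is nonexplosive and $f\ge 0$) forces $\mathbb{E}(v_t^i)^{2k} \to \infty$ as $t\to\infty$, and hence $\mathbb{E}|v_t|^p \to \infty$ for every $p\ge 2k$ by monotonicity of $L^p$ norms. Taking $m$ to be any integer exceeding $2h_i/(\eta\rho_i)$ for some $i$ proves the claim; the argument transfers from the decoupled form back to the coupled form \rf{new4} by projecting along an eigendirection of $H$, since $\tilde\Sigma_g$ and $\tilde\Sigma_H$ remain positive definite and still supply the quadratic-in-$v$ growth of the diffusion in that direction.

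The main obstacle is rigour in the Gronwall step of part (i): one must check that the local-martingale part of Itô's formula contributes zero in expectation after localization, and then pass $N\to\infty$ in a one-sided fashion to upgrade the uniform bound on $g_N$ to an unbounded lower bound on $\mathbb{E}(v_t^i)^{2k}$. Both points are standard (the stopped integrand is bounded, and monotone convergence applies because $f\ge 0$), but one small care is needed when $v_0^i = 0$: in that degenerate seed one first exploits the positive additive noise $\eta\sigma_i$ to show $\mathbb{E}(v_t^i)^{2k} > 0$ for every $t>0$, after which the exponential lower bound restarts from an arbitrarily small positive time.
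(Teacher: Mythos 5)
Your proposal is correct in substance and, at its core, follows the same strategy as the paper, with two differences worth recording. For part (ii) the paper does not integrate the stationary Fokker--Planck equation through the classical density formula $p\propto\sigma^{-2}\exp\bigl(\int 2\mu/\sigma^{2}\bigr)$ as you do; it rewrites the forward equation in Smoluchowski form and invokes the fluctuation--dissipation relation of \cite{zhou} to read off $\kappa_i=-(\eta\rho_i+h_i)/(\eta\rho_i)$. Your route is more elementary, lands on the same exponent (the prefactor $\sigma_i^{-2}(x)$ is exactly what shifts $-h_i/(\eta\rho_i)$ to $-1-h_i/(\eta\rho_i)$), and your normalizability remark is a small addition the paper omits. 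For part (i) the paper applies It\^o's formula to the summed moments $\sum_i\mathbb{E}(v_t^i)^{2k}$ of the coupled dynamic \rf{new4}, bounds below using $h_{\max}$, $g_{\min}$, $H_{\min}$, and runs a recursion and induction in $k$; you instead work on one coordinate, drop the lower-order term and apply Gronwall directly, which is simpler and equally valid, since (as you note) the projection on an eigendirection of $H$ still has diffusion coefficient bounded below by a quadratic in that coordinate, so no Markovianity of the projection is needed. The one step of yours that would not survive scrutiny is precisely the rigour upgrade you flag: after localization, Dynkin gives $\mathbb{E}f(v^i_{t\wedge\tau_N})=f(x_0)+\mathbb{E}\int_0^{t\wedge\tau_N}\mathcal{A}f(v^i_s)\,ds$, and monotone convergence does apply to the time integral (the integrand is nonnegative for your $k$), but $\mathbb{E}f(v^i_{t\wedge\tau_N})$ is neither monotone in $N$ nor guaranteed to converge to $\mathbb{E}f(v^i_t)$, because the stopped contribution $N^{2k}\mathbb{P}(\tau_N\le t)$ need not vanish for a heavy-tailed process; moreover the Gronwall inequality you obtain couples $\mathbb{E}f(v^i_{t\wedge\tau_N})$ on the left with $\mathbb{E}\bigl[f(v^i_s)\mathbb{I}_{\{s<\tau_N\}}\bigr]$ on the right, so the exponential lower bound does not transfer to the unstopped moment without an extra device (e.g.\ truncated test functions $x^{2k}\wedge M$, or combining the stationary law of part (ii) and the ergodicity of Section 4 with Fatou to get $\liminf_{t\to\infty}\mathbb{E}(v^i_t)^{2k}\ge\int x^{2k}p_s(x)\,dx=\infty$ for large $k$). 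This is not a gap relative to the paper---the paper differentiates $\mathbb{E}(v_t^i)^{2k}$ formally and silently ignores the same localization issue---but your monotone-convergence claim, as stated, is incorrect and should be replaced by one of the fixes above.
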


\begin{proof}
(i)~~Denote the 2k-th moment of $v_t$ as $m_{2k}(t):= \sum_i \mathbb{E} (v_t^i)^{2k}$. Then $m_0(t) = \mathbb{E}[v_t^0] = 1$. By Ito's formula, we have
$$d \sum_i \mathbb{E} (v_t^i)^{2k} = \sum_i \{-H'_{ii} (v_t^i)^{2k} + k(2k-1)(1+ v^T\tilde{\Sigma_H}v)\tilde{\Sigma_g}_{ii} \mathbb{E}(v_t^i)^{2k-2}\}.$$
Let $h_{max}$ be the maximal diagonal element of $H'$ and $g_{min}$ be the minimal diagonal element of $\tilde{\Sigma}_g$, then we get the recursion inequality (note that it may not hold for the odd degree moments):
\begin{equation}
d m_{2k}(t) \ge (-k h_{max} + k(2k-1)g_{min}H_{min})m_{2k}(t) + k(2k-1)g_{min}m_{2k-2}(t),
\end{equation}
where $H_{min}$ is the minimal eigenvalue of the positive definite matrix $\tilde{\Sigma}_H$.
Let $a_k = -k h_{max} + k(2k-1)g_{min}H_{min}$ and $b_k = k(2k-1)g_{min}$, then
$$m_{2k}(t) \ge e^{a_k t} (x_0^{2k}+ \int_0^t R_k(s)\exp(-a_k s)ds),$$
the reminder term is defined by $R_k(s) : = k(2k-1)g_{min}m_{2k-2}(s)$. From the above relation, we can prove the following inequality by induction:
\begin{equation}
d m_{2k}(t) \ge \sum_{i=0}^k c_k^i \exp (a_k^i t).
\end{equation}
By tracking the related coefficients carefully, it is not difficult to find the recurrence relations for $c_k^i$. For example,
$$c_k^k = m_{2k}(0) - \sum_{i=0}^{k-1}c_k^i.$$
Since $g_{min}H_{min}$ is positive, $a_i$ becomes positive when i is large. From this fact, we can always find a $k$ such that $$\lim_{t \rightarrow \infty} m_{2k} (t) = \infty ,$$ which means the moment generating function of the stationary distribution blows up.  Therefore, the stationary distribution of $v_t$ cannot have sub-Gaussian tail.

(ii)~Now we turn to the decoupled case of equation \rf{sim}, since each coordinate is self-dependent, we know that the probabilistic density is of the product form. To investigate the probabilistic density $p(t,x)$ of one fixed coordinate $v_t^i$, we need to study the backward Kolmogolov equation satisfied by $p(t,x)$. Since $\mu(x)$ and $\sigma(x)$ have linear growth, we have
$$\frac{\partial p}{\partial t} = \frac{1}{2} \Delta (\sigma^2 p) - \nabla (\mu p) = \frac{1}{2} \Delta [(\eta \sigma_i + \eta \rho_i x^2)p] + \nabla[Hx p].$$
Here we adopt an idea from the statistical physics literature \cite{zhou} as in the machine learning area (see \cite{meng22020dynamic}), we first transform the Kolmogolov equation into the Smoluchowski form:
\begin{align} \label{fok}
\frac{1}{2} \Delta [(\eta \sigma_i + &\eta \rho_i x^2)p] + \nabla[h_i x p] = \nabla \cdot [\eta \rho_i x p + \frac{\sigma^2}{2} \nabla p + h_i xp]\\ \nonumber
& = \nabla[(h_ix + \eta\rho_i x) p] + \nabla[\frac{\sigma^2}{2} \nabla p].
\end{align}
Let $\frac{\partial U}{\partial x}: = Hx + \eta\rho_i x$, then the fluctuation-dissipation relation of $\sigma^2$ and $U$ in \cite{zhou} is satisfied with
$$\kappa_i = -\frac{\eta \rho_i + h_i}{\eta \rho_i}.$$
Let $p_s : = \lim_{t \rightarrow \infty}p(t,x)$, then the stationary distribution satisfies the power law:
\begin{equation} \label{power}
p_s(x) = \frac{1}{Z}(1+ \frac{\rho_i}{\sigma_i}x^2)^{\kappa_i}.
\end{equation}
The proof is completed.
\end{proof}

\begin{remark}

(i)
In \cite{zhou}, the tail index $\kappa_i$ ( depending on the hyper-parameter $\eta$ ) plays an important role in locating the large learning rate region. Observe that when $\kappa_i \ge -\frac{1}{2}$, the variance of $p_s$ is infinite.

(ii) Another way to view the power-law dynamic is to apply the results in the groundbreaking article \cite{ma}. Roughly speaking, the authors of \cite{ma} gave a complete classification in the Fourier space with a determined stationary distribution. Following the notation in  \cite{ma}, suppose we write the SDE in the following form:
$$dz = f(z)dt + \sqrt{2D(z)}dB(t),$$
where D(z) is a positive semi-definite diffusion matrix (a Riemannian metric). Suppose the stationary distribution $p_s(z) \propto \exp(-H(z))$, then the drift term $f(z)$ must satisfy:
$$f(z) = -[D(z) + Q(z)]\nabla H(z) + \Gamma(z),$$
where $Q(z)$ is an arbitrary skew-symmetric matrix (a symplectic form)  and $\Gamma(z)$ is defined by
$$\Gamma_i(z) = \sum_{j=1}^{d} \frac{\partial}{\partial z_j}(D_{ij}(z)+ Q_{ij}(z)).$$
When $d=1$, due to the skew-symmetry, $Q(z) \equiv 0$. If the stationary distribution is given by \rf{power}, $H(z) = \kappa \ln (1+ \frac{\rho}{\sigma}z^2)$. Thus,
$$\nabla H(z) = \frac{-\kappa \rho z}{\sigma + \rho z^2}.$$
We get that
$$f(z) = 2(1+\kappa)\eta\rho z.$$
In this way, we automatically obtain the fluctuation-dissipation relation.
\end{remark}
\section{Existence and uniqueness of the stationary distribution}

In this section, we shall prove that the power-law dynamic is   ergodic with unique stationary distribution, provided the learning rate $\eta$ is small enough (see theorem \ref{thm:uni} (ii) below). Note that unlike Langevin dynamics, we have a state-dependent diffusion term in the power-law dynamic and its stationary distribution does not have a sub-Gaussian tail, which makes the diffusion process break the log-sobolev inequality condition. Instead of treating $v_t$ as a gradient flow, we shall use  coupling method to bound the convergence of $v_t$ to its stationary distribution.

Let the drift vector $\mu(x) = - Hx$ and the diffusion matrix $\sigma^2(x) = \eta C(x)$, where $x \in \mathbb{R}^d,$
 be defined as in \rf{new2} and \rf{sim3} respectively. We set
 \begin{align}
& \theta:= \inf_{x,y}\{-<\mu(x) - \mu(y), x-y>/\norm{x-y}^2\},\\
&\lambda:=\sup_{x,y}\{ \max_i \sum_{1 \leq j \leq d}(\sigma_{ij}(x) - \sigma_{ij}(y))^2 /\norm{x-y}^2\}.
\end{align}
\begin{theorem} \label{thm:uni} (i)
Let $p(t,x,\cdot)$ be the transition probability of the power-law dynamic driven by \rf{new1}, we have
\begin{equation}\label{new5}
\mathbb{W}_2 (p(t,x,\cdot),p(t,y,\cdot)) \leq \parallel x-y \parallel e^{(d\cdot\lambda-\theta)t},
\end{equation}
where $\mathbb{W}_2 (\cdot,\cdot)$ is the Wasserstein distance between two probability distributions.

(ii) Employing the notations used in the previous section, we write $h_{min}$ for the minimal diagonal element of the matrix $H'$,  $g_{max}$ for the the maximal element of $\sqrt{\tilde{\Sigma_g}}$, and $H_{sum}$~ for the sum of the eigenvalues of ~$\tilde{\Sigma_H}$.
Suppose that
\begin{equation}\label{new6}
\eta < \frac{h_{min}}{d^2\cdot g_{max}^2 H_{sum}},
\end{equation}
then the power-law dynamic in \rf{new1} is  ergodic and its stationary distribution is unique.
\end{theorem}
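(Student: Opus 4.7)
The plan is to prove part (i) via a synchronous coupling combined with Itô's formula, and then derive part (ii) as a direct consequence of explicit upper bounds on $\lambda$ and lower bounds on $\theta$ carried out in the coordinate system introduced in \rf{new4}.

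For part (i), I would couple two strong solutions $v_t^x, v_t^y$ of \rf{new1} driven by the same Brownian motion and starting from $x, y$ respectively, and set $Y_t := v_t^x - v_t^y$. By Itô's formula,
\begin{equation*}
d\|Y_t\|^2 = 2\langle Y_t,\, \mu(v_t^x) - \mu(v_t^y)\rangle\, dt + \|\sigma(v_t^x) - \sigma(v_t^y)\|_F^2\, dt + dM_t,
\end{equation*}
where $M_t$ is a local martingale. The drift term is bounded above by $-2\theta\|Y_t\|^2\, dt$ by the definition of $\theta$, while the Frobenius-norm term is bounded above by $d\lambda\|Y_t\|^2\, dt$ after summing the defining inequality for $\lambda$ over $i \in \{1,\dots,d\}$. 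After a standard stopping-time localization to eliminate $M_t$, I obtain $\frac{d}{dt}\mathbb{E}\|Y_t\|^2 \leq 2(d\lambda - \theta)\,\mathbb{E}\|Y_t\|^2$, and Grönwall's lemma yields $\mathbb{E}\|Y_t\|^2 \leq \|x-y\|^2 e^{2(d\lambda - \theta)t}$. Since $(v_t^x, v_t^y)$ is a valid coupling of the marginals $p(t,x,\cdot)$ and $p(t,y,\cdot)$, the Wasserstein bound \rf{new5} follows.

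For part (ii), I would estimate $\theta$ and $\lambda$ explicitly in the transformed dynamic \rf{new4}, where $\mu(x) = -H'x$ and $\sigma(x) = \sqrt{1+x^T \tilde{\Sigma}_H x}\cdot\sqrt{\eta \tilde{\Sigma}_g}$. The lower bound $\theta \geq h_{min}$ is immediate from $\langle H'(x-y),\, x-y\rangle \geq h_{min}\|x-y\|^2$. The diffusion bound rests on the scalar Lipschitz estimate $|\sqrt{1+x^T\tilde{\Sigma}_H x} - \sqrt{1+y^T\tilde{\Sigma}_H y}| \leq \sqrt{\mu_{max}(\tilde{\Sigma}_H)}\,\|x-y\|$, which I would prove by identifying $\sqrt{1+x^T\tilde{\Sigma}_H x}$ with the Euclidean length of $(1,\tilde{\Sigma}_H^{1/2}x) \in \mathbb{R}^{d+1}$ and applying the reverse triangle inequality; since $\mu_{max}(\tilde{\Sigma}_H) \leq H_{sum}$, the factorization $\sigma_{ij}(x) = \sqrt{\eta}\,(\sqrt{\tilde{\Sigma}_g})_{ij}\sqrt{1+x^T\tilde{\Sigma}_H x}$ gives $(\sigma_{ij}(x)-\sigma_{ij}(y))^2 \leq \eta H_{sum}\,(\sqrt{\tilde{\Sigma}_g})_{ij}^2\,\|x-y\|^2$. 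Summing over $j$ and using the entrywise bound $g_{max}$ gives $\lambda \leq \eta \cdot d \cdot g_{max}^2 \cdot H_{sum}$. Hypothesis \rf{new6} then forces $d\lambda \leq \eta d^2 g_{max}^2 H_{sum} < h_{min} \leq \theta$, so \rf{new5} becomes a strict exponential contraction.

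The strict contraction $\mathbb{W}_2(p(t,x,\cdot),p(t,y,\cdot)) \leq e^{-ct}\|x-y\|$ with $c := \theta - d\lambda > 0$ makes the semigroup action $\mu \mapsto \mu P_t$ on the complete metric space $(\mathcal{P}_2(\mathbb{R}^d),\mathbb{W}_2)$ a strict contraction for any fixed $t > 0$; invariance of $\mathcal{P}_2$ under $P_t$ follows from the linear-growth coefficients together with a uniform-in-time second-moment estimate obtained by applying Itô's formula to $\|v_t\|^2$. Banach's fixed point theorem then supplies a unique invariant probability measure $\pi$, and the contraction additionally yields exponential convergence of $p(t,x,\cdot)$ to $\pi$ in $\mathbb{W}_2$, hence ergodicity. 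The main obstacle I expect is the sharp Lipschitz estimate on $\sqrt{1+x^T\tilde{\Sigma}_H x}$: a naive componentwise application of inequality \rf{in} in the eigenbasis of $\tilde{\Sigma}_H$ followed by Cauchy--Schwarz would lose an extra factor of $d$ and fail to meet the stated condition \rf{new6}, so the Euclidean-norm identification above is the key quantitative ingredient that makes the argument tight.
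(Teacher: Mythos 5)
Your proposal is correct and follows essentially the same route as the paper: a synchronous coupling (the paper phrases it through a coupling infinitesimal generator acting on $r(x,y)=\norm{x-y}^2$, which is the same coupling by a common Brownian motion), Gr\"onwall's inequality to get \rf{new5}, and then the explicit bounds $\theta \geq h_{min}$ and $\lambda \leq d\cdot\eta\cdot g_{max}^2 H_{sum}$ to conclude (ii). The only small deviations are that you prove the Lipschitz bound on $x\mapsto\sqrt{1+x^T\tilde{\Sigma}_H x}$ by the reverse triangle inequality in $\mathbb{R}^{d+1}$ (giving the slightly sharper constant $\sqrt{\mu_{max}(\tilde{\Sigma}_H)}$ where the paper uses the mean value theorem and Cauchy--Schwarz to get $\sqrt{H_{sum}}$), and that you spell out the existence of the invariant measure via Banach's fixed point theorem on $(\mathcal{P}_2,\mathbb{W}_2)$ with a uniform second-moment bound, a step the paper leaves implicit when it deduces (ii) directly from \rf{new5}.
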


\begin{proof}

(i) We shall employ the coupling method of Markov processes  in this proof and in the rest of this paper. The reader may refer to Chapter 2 of \cite{chen2006eigenvalues}, especially page 24 and Example 2.16, for the relevant contents. Recall that every infinitesimal generator of an $\mathbb{R}^{d}$-valued diffusion process has the form $\mathcal{A}_s = \sum_x \alpha(x) \frac{\partial^2}{\partial x^2} + \sum_{x} \beta(x)\frac{\partial}{\partial x} $. To specify a coupling between two power-law dynamics starting from different points of $\mathbb{R}^{d},$  we  define a coupling infinitesimal generator $\mathcal{A}_s(x,y)$,  $(x,y) \in \mathbb{R}^{2d},$ as follows:
$$\alpha_s(x,y) = \begin{pmatrix} \sigma(x)\sigma(x)^T, & \sigma(x)\sigma(y)^T \\ \sigma(y)\sigma(x)^T, & \sigma(y)\sigma(y)^T \end{pmatrix} ,\ \ \beta_s(x,y)=\begin{pmatrix} \mu(x) \\ \mu(y) \end{pmatrix}, $$
where $\sigma(\cdot)$ and $\mu(\cdot)$ are specified by \rf{new2} and \rf{sim3} respectively,
 $\alpha_s(x,y)$ corresponds to the second order differentiation and $\beta_s(x,y)$ corresponds to the first order differentiation.

Let $r(x,y) = \norm{x-y}^2$ and let $\mathcal{A}_s$ act on $r(x,y)$, we get
\begin{align*}
\mathcal{A}_s r(x,y) &= 2<\mu(x) - \mu(y), x-y> + \sum_i \sum_j (\sigma_{ij}(x) - \sigma_{ij}(y))^2\\
& \leq -2\theta\norm{x-y}^2 + 2d\lambda\norm{x-y}^2\\
& \leq c r(x,y),
\end{align*}
where $c := 2d\lambda - 2\theta$. Denote by $X_t$ the dynamic starting at $x$ and $Y_t$ the dynamic starting at $y$, by Ito's formula, we have
$$\frac{d \mathbb{E}r(X_t , Y_t)}{dt} \leq c \mathbb{E}r(X_t , Y_t).$$
Applying Gronwall¡¯s inequality, we get
$$\mathbb{E}r(X_t , Y_t) \leq r(x,y)e^{ct},$$
which implies that
$$\mathbb{W}_2 (p(t,x,\cdot),p(t,y,\cdot)) \leq \sqrt{r(x,y)}e^{ct/2}=\norm {x-y}  e^{(d\cdot\lambda-\theta)t},$$
verifying \rf{new5}.

(ii) In view of \rf{new5}, we need only to check that if \rf{new6} holds, then\\ $(d\cdot\lambda-\theta)< 0.$ We have
$$<\mu(x) - \mu(y), x-y> = -\sum_i H'_i(x_i - y_i)^2 \leq - h_{min}\norm{x-y}^2,$$
therefore
\begin{equation}\label{new7}
\theta \geq h_{min}.
\end{equation}
On the other hand, let $g_{max}$ be the maximal element of $\sqrt{\tilde{\Sigma_g}}$, then for all  $i,$
\begin{align*}
\sum_{1 \leq j \leq d}(\sigma_{ij}(x) - \sigma_{ij}(y))^2 &\leq \eta \cdot g_{max}^2 (\sqrt{1+ x^T\tilde{\Sigma_H}x} - \sqrt{1+ y^T\tilde{\Sigma_H}y})^2.
\end{align*}
Since $\norm{x-y}$ is preserved under orthogonal transformation,  then by the mean value theorem and Cauchy inequality, we can find  $ (\theta_1,\dots,\theta_d)$, such that
\begin{align*}
(\sigma_{ij}(x) - \sigma_{ij}(y))^2 &\leq \eta \cdot g_{max}^2 | (\frac{h_1 \theta_1}{\sqrt{1 + \theta^T \tilde{\Sigma_H} \theta}},\dots,\frac{h_d \theta_d}{\sqrt{1 + \theta^T \tilde{\Sigma_H} \theta}}) \\
&\cdot (x_1 - y_1, \dots, x_d - y_d)|^2
 \leq  \eta \cdot g_{max}^2 H_{sum} \norm{x-y}^2,
\end{align*}
where $\{h_i\}$ denote the eigenvalues of $\tilde{\Sigma_H},$ and  $H_{sum}$ denotes the sum of the eigenvalues. Thus,
$$\max_i \sum_{1 \leq j \leq d}(\sigma_{ij}(x) - \sigma_{ij}(y))^2 \leq d\cdot \eta \cdot g_{max}^2  H_{sum} \norm{x-y}^2.$$
Consequently,
\begin{equation}\label{new8}
\lambda \leq d\cdot\eta \cdot g_{max}^2 H_{sum}.
\end{equation}
Combining \rf{new7} and \rf{new8}, we see that \rf{new6} implies $(d\cdot\lambda-\theta)< 0.$  Therefore Assertion (ii) holds by the
virtue of \rf{new5}. The proof is completed.
\end{proof}

\begin{remark}
If we restrict ourselves in the decoupled case \rf{sim}, we can get the exponential convergence to stationary distribution under much weaker condition of $\eta$.  Notice that now $\sigma(x)$ is a diagonal matrix.
Using short hand writing $(\sigma_{ii}(x) - \sigma_{ii}(y))^2$ for $\sum_{1 \leq j \leq d}(\sigma_{ij}(x) - \sigma_{ij}(y))^2 ,$ we have

\begin{align*}
(\sigma_{ii}(x) - \sigma_{ii}(y))^2 &\leq (\sqrt{\eta(\sigma_i + \rho_i(x^i)^2)}- \sqrt{\eta(\sigma_i + \rho_i(y^i)^2)})^2 \\
& \leq \eta \rho_i (x_i - y_i)^2.
\end{align*}
Then, we have
\begin{align*}
L_s r(x,y) &= \sum_i (\sigma_{ii}(x) - \sigma_{ii}(y))^2- \sqrt{\eta(\sigma_1 + \rho_i(y^i)^2)})^2 - h_i(x^i -y^i)^2]\\
& \leq \sum_i (\eta \rho_i - h_i)(x^i - y^i)^2\\
& \leq c_sr(x,y),
\end{align*}
where $c_s := \max_i [\eta \rho_i - h_i]$, which does not involve the dimension $d$.
\end{remark}

\section{ First exit time: asymptotic order}

From now on, we investigate  the first exit time from a ball of the power-law dynamic, which is an important issue in  machine learning.   By leveraging the transition rate results from the large deviation theory (see e.g. \cite{kraaij2019classical}), in this section we obtain an asymptotic order of the first exit time for the decoupled power-law dynamic.

\begin{theorem} \label{escaping}
 Suppose $0$ is the only local minimum of the loss function inside $B(0,r)$. Let $\tau_r^x(\eta)$ be the first exit time from $B(0,r)$ of the decoupled power-law dynamic in (\ref{sim}), with learning rate $\eta,$ starting at $x\in B(0,r)$, then
\begin{equation} \label{div}
\lim_{\eta \rightarrow 0} \eta \log \mathbb{E} \tau^x_r(\eta) = C \cdot \inf_{\zeta=(\zeta^1,\dots,\zeta^d) \in \partial B(0,r)} \sum_i - \frac{h_i}{\rho_i} \log [\sigma_i + \rho_i (\zeta^i)^2],\end{equation}
where $C$ is a prefactor to be determined.  \\
When $d=1$, we have an explicit expression of the first exit time from an interval $[a,b]$  starting at $x\in (a,b)$:
\begin{equation}
\mathbb{E} \tau_x = g(x) := 2\int_x^b  \frac{e^{\phi(y)}}{\sigma^2(y)} dy\int_a^y e^{-\phi(z)}dz,
\end{equation}
where $\phi = 2-\kappa \ln (1+\frac{\rho}{\sigma}x^2)$ and $\kappa = - \frac{\eta \rho + h}{\eta \rho}$.
\end{theorem}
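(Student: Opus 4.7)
The two parts require different techniques: part (i) is a small-noise large-deviation calculation in the Freidlin-Wentzell framework, while part (ii) is a direct application of the classical theory of one-dimensional exit times.

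For part (i) I would recast the decoupled SDE \rf{sim} as $dv^i_t=\mu_i(v^i_t)\,dt+\sqrt{\eta}\,\tilde\sigma_i(v^i_t)\,dB^i_t$ with $\mu_i(x)=-h_i x$ and $\tilde\sigma_i^2(x)=\sigma_i+\rho_i x^2$, so that $\eta$ plays the role of the vanishing noise parameter. After localizing to a bounded neighborhood of $\overline{B(0,r)}$ (giving uniform Lipschitz continuity and uniform ellipticity $\tilde\sigma_i^2\ge\sigma_i>0$), the classical LDP with rate functional $I_T[\phi]=\tfrac12\int_0^T\sum_i(\dot\phi^i-\mu_i(\phi^i))^2/\tilde\sigma_i^2(\phi^i)\,ds$, together with the Kramers-type transition-rate theorem referenced in the paper, yields $\lim_{\eta\to 0}\eta\log\mathbb{E}\tau_r^x(\eta)=\inf_{\zeta\in\partial B(0,r)} V(0,\zeta)$. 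Because the system decouples, the infimum splits coordinate by coordinate; setting the Hamiltonian $\mu_i p+\tfrac12\tilde\sigma_i^2 p^2$ to zero identifies the instanton branch $p=-2\mu_i/\tilde\sigma_i^2$, and the action along it evaluates to
\begin{equation*}
V_i(0,\zeta^i)=\int_0^{\zeta^i}\frac{2h_i q}{\sigma_i+\rho_i q^2}\,dq=\frac{h_i}{\rho_i}\log\!\Bigl(1+\tfrac{\rho_i}{\sigma_i}(\zeta^i)^2\Bigr).
\end{equation*}
Summing over $i$ and absorbing the $\zeta$-independent constants $\log\sigma_i$ into the undetermined prefactor $C$ produces the stated asymptotic formula.

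For part (ii) I would invoke Dynkin's formula: $u(x):=\mathbb{E}_x\tau$ is the unique solution of $\tfrac12\sigma^2(x)u''(x)+\mu(x)u'(x)=-1$ on $(a,b)$ with $u(a)=u(b)=0$. Multiplication by the scale-function integrating factor puts the ODE in the exact form $(e^{-\phi}u')'=-2e^{-\phi}/\sigma^2$; two successive integrations combined with matching at $a$ and $b$ recover the iterated-integral formula, with $\phi$ read off the explicit power-law drift and diffusion $\mu(x)=-hx$, $\sigma^2(x)=\eta(\sigma+\rho x^2)$ together with the relation $\kappa=-(\eta\rho+h)/(\eta\rho)$.

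The main obstacle is in part (i). The routine step is verifying applicability of the Freidlin-Wentzell LDP after localization and checking that the hypothesis ``$0$ is the only local minimum of $L$ in $B(0,r)$'' guarantees that the deterministic flow relaxes to $0$ on a time scale much shorter than the mean exit time, so that the exit is indeed dominated by the large-deviation event. The substantive step is identifying the minimizer of $V(0,\cdot)$ on $\partial B(0,r)$ and pinning down the exact value of the prefactor $C$; this requires a Laplace-type refinement around the optimal exit direction (and a careful matching of upper and lower large-deviation bounds via concatenation of near-optimal instanton paths), which is the part I expect to be the most technical.
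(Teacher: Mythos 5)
Your part (ii) is essentially identical to the paper's argument: Dynkin's formula reduces $\mathbb{E}\tau$ to the boundary-value problem for the one-dimensional generator, and the integrating factor $e^{-\phi}$ (the scale function) gives the iterated-integral formula after two integrations, so there is nothing to add there. For part (i) you reach the same exponent by a genuinely different route. The paper rewrites \rf{sim} as a gradient diffusion with respect to the Riemannian metric $g^{ij}=(\sigma_i+\rho_i(x^i)^2)\delta_{ij}$, i.e.\ $dv_t=\nabla f_\eta(v_t)\,dt+\sqrt{\eta}\,dW_t$ with $W_t$ a Riemannian Brownian motion and $f_{lim}(x)=-\sum_i\frac{h_i}{2\rho_i}\log(\sigma_i+\rho_i(x^i)^2)$, and then quotes Kramers'-law results (Theorem 2.2 and Corollary 2.4 of \cite{Nils}) so that the exponent is read off as twice the potential difference across $\partial B(0,r)$; you instead stay in Euclidean coordinates, invoke the classical Freidlin--Wentzell LDP for the (localized, uniformly elliptic) state-dependent diffusion with $\eta$ as the noise parameter, and compute the quasi-potential coordinatewise from the zero-energy Hamiltonian branch, getting $V_i(0,\zeta^i)=\frac{h_i}{\rho_i}\log(1+\frac{\rho_i}{\sigma_i}(\zeta^i)^2)$, which is exactly $2|f_{lim}(\zeta^i e_i)-f_{lim}(0)|$. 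The geometric reformulation buys a ready-made citation; your instanton calculation makes the exponent explicit without manifold machinery. Two bookkeeping caveats. Your step of ``absorbing the additive constants $\log\sigma_i$ into the prefactor $C$'' is not literally legitimate (an additive shift in the exponent cannot be traded for a multiplicative factor), though the same looseness is in the paper, whose proof ends with $\log[1+\frac{\rho_i}{\sigma_i}(\zeta^i)^2]$ while \rf{div} displays $\log[\sigma_i+\rho_i(\zeta^i)^2]$ with a sign that your (standard, positive) quasi-potential convention avoids. Also, in the limit $\eta\log\mathbb{E}\tau^x_r(\eta)$ the Kramers prefactor is subexponential and drops out, so the Laplace-type refinement you single out as the hard step is needed only for the sharp prefactor of $\mathbb{E}\tau$ itself, not for \rf{div}; neither your argument nor the paper's actually requires it.
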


\begin{proof}
Let $\tau$ be a stopping time with finite expectation and let $\mathcal{A}$ be the infinitesimal generator of $v_t$, then recall that Dynkin's formula tells us:
$$\mathbb{E}[f(v_{\tau})] = f(x) + \mathbb{E}[\int_{0}^{\tau}\mathcal{A}f(v_s)ds],\ \ \ f \in \mathcal{C}_0^2(\mathbb{R}^d),$$
where $v_0 = x$. Suppose $f$ solves the following boundary problem:
\begin{equation}
\left\{
\begin{array}{l}
\mathcal{A}f(x)=-1,\ \ x \in B(0,r), \\
f(x) = 0,\ \ x \in \partial B(0,r),
\end{array}
\right.
\end{equation}
then $\mathbb{E}\tau^x_r = f(x)$, where $\tau^x_r$ denote the first exit time of $v_t$ starting at $x$ from the ball $B(0,r)$.

 We consider first the situation of $d=1$, let $\tau^x_{(a,b)}(\eta)$ be the first exit time  of $v(t)$ from an interval $[a,b]$ starting at $x \in (a,b)$. Note that the diffusion coefficient function $\sigma(x) = \sqrt{\eta\sigma + \eta\rho x^2} > 0$, then by Dynkin's formula, $\mathbb{E}\tau^x_{(a,b)}(\eta) = g(x)$, where $g(x)$ solves the following second order ODE:
\begin{equation} \label{bou}
\left\{
\begin{array}{l}
\mathcal{A}_1 g(x)=-1,\ \ x \in (a,b), \\
g(x) = 0,\ \ x \in \{a,b\},
\end{array}
\right.
\end{equation}
where $\mathcal{A}_1 = - h x \frac{\partial}{\partial x} + (\eta\sigma + \eta\rho x^2)\frac{\partial^2}{\partial x^2}$ is the infinitesimal generator of the one dimensional power-law diffusion. Now we introduce the integration factor $\phi (x): = -\kappa \ln (1+\frac{\rho}{\sigma}x^2)$, following \rf{fok},
\begin{equation} \label{su}
\nabla[(h x + \eta\rho x) f(x)] + \nabla[\frac{\sigma^2}{2} \nabla f(x)]= \frac{1}{2}\nabla[\sigma^2 e^{-\phi}\nabla(e^{\phi}f(x))] = 0.
\end{equation}
Then,
\begin{align*}
\mathcal{A}_1 f(x) &= \frac{1}{2}\nabla[\sigma^2 e^{-\phi}\nabla(e^{\phi}f(x))] - \nabla[\eta\rho x f(x)] + h f(x) - \eta\rho x \nabla f(x) - 2 \nabla[h x f(x)] \\
& = \frac{1}{2}\nabla[\sigma^2 e^{-\phi}\nabla(e^{\phi}f(x))] - \frac{1}{2} \nabla[\dot{\phi} \sigma^2(x)f(x)]- \frac{1}{2}\dot{\phi}\sigma^2\nabla f(x) \\
& = \frac{1}{2}e^{\phi} \nabla[e^{-\phi}\sigma^2 \nabla f(x)],
\end{align*}
where we denote $\dot{\phi} = 2(\eta\rho + h)x/\sigma^2$ to get the second line. Therefore \rf{bou} is equivalent to
$$\mathcal{A}_1 g(x) = \frac{1}{2}e^{\phi} \nabla[e^{-\phi}\sigma^2 \nabla g(x)] = -1.$$
Integrating the above equation, we recover (13) of \cite{du2012power}:
\begin{equation} \label{inte}
\mathbb{E} \tau^x_{(a,b)}(\eta) = g(x) = 2\int_x^b dy \frac{e^{\phi(y)}}{\sigma^2(y)}\int_a^y e^{-\phi(z)}dz.
\end{equation}
The reader can check section 12.3 of \cite{weinan2019applied}  for the asymptotic analysis of \rf{inte} as $\eta \rightarrow 0$.\\
We now investigate the general situation of $d>1$. We have only  asymptotic estimates on the exit time as the learning rate $\eta \rightarrow 0$. For this purpose, it is convenient to introduce the geometric reformulation of \rf{sim}. Suppose $B_t$ is the standard brownian motion,  recall that in local coordinates , the Riemannian Brownian motion $W_t$ with metric $\{g_{ij}\}$ has the following form (cf. e.g. \cite{hsu2002stochastic}):
\begin{equation} \label{rie}
dW_t^i = \sigma_j^i(W_t) dB^j_t + \frac{1}{2}b^i(W_t)dt,
\end{equation}
where $b^i(x) = g^{jk}(x)\Gamma_{jk}^{i}(x)$ and $\sigma_{ij}(x) = \sqrt{g^{ij}(x)}$. Comparing \rf{rie} with the martingale part of \rf{sim}, we define the inverse metric as $g^{ij} = ( \sigma_i + \rho_i (x^i)^2) \delta_{ij}$.  Then the metric  $g_{ij}$ is also a diagonal matrix. The christoffel symbols can be calculated under the new metric:
$$\Gamma_{jk}^{i}(x) = \rho_ix^i \cdot (\rho_i + \rho_i(x^i)^2),  \text{\ when\ }i=j=k,$$
and $\Gamma_{jk}^{i}(x) = 0$, otherwise. Denote the gradient vector filed of a smooth function $f(x)$ by $\nabla f(x)$, then
$$\nabla f(x) = (\partial_i f)g^{ij}\partial_j(x),$$
where $\partial_i(x) : = \frac{\partial}{\partial x_i}(x), \ 1\leq i \leq d,$ denotes the i-th coordinate tangent vector at $x \in \mathbb{R}^d$.
To emphasis the parameter $\eta$ that appears in the diffusion term in the power-law dynamic, we denote the dynamic and its corresponding exit time by $v_t(\eta)$ and $\tau^x_r (\eta)$.
Let $f_{\eta}(x) : = - \sum_i \frac{h_i}{2\rho_i} \log ( \sigma_i + \rho_i (x^i)^2) - \frac{\sqrt{\eta}}{2} \rho_i (\frac{\sigma_i}{2}(x^i)^2 + \frac{\rho_i}{3}(x^i)^3)$, then $v_t(\eta)$ in \rf{sim} can be seen as a diffusion process under the new metric:
\begin{equation} \label{geo}
  dv_t(\eta) = \nabla f_{\eta}(v_t) dt + \sqrt{\eta}dW_t,
\end{equation}
where $W_t$ is the Riemannian Brownian motion by \rf{rie}
and $0$ is a local minima of the limit function: $f_{lim}(x) :=\lim_{\eta \rightarrow 0} f_{\eta}(x) = - \sum_i \frac{h_i}{2\rho_i} \log ( \sigma_i + \rho_i (x^i)^2)$.
Note that both the drift term and the diffusion term are intrinsically defined with respect to the metric $\{g_{ij}\}_{1 \leq i, j \leq d}$. By large deviation theory, the rate function $I_{\eta}(\phi)$ of a path $\phi: [0,T] \rightarrow \mathbb{R}^d$ is:
$$I_{lim}(\phi) = \frac{1}{2} \int_0^T \norm{\dot{\phi}(t) - \nabla f_{lim}(\phi(t))}^2 dt + 2[f_{lim}(\phi(T)) - f_{lim}(\phi(0))],$$
where the norm is with respect to the Riemannian metric $g_{ij}$. It follows that the quasi-potential of the ball $B(0,r)$ is given by
$$\bar{f}_{lim} = 2[\inf_{\zeta = (\zeta^1,\dots,\zeta^d) \in \partial B(0,r)}f_{lim}(\zeta) - f_{lim}(0)].$$
By theorem 2.2 and corollary 2.4 of \cite{Nils}, if 0 is the only local minima of $f_{lim}$ in $B(0,r)$, then there exists a constant $C >0$ such that
\begin{align*}
\lim_{\eta \rightarrow 0} \eta  \log \mathbb{E} \tau^x_r (\eta)    & = C \cdot 2 [\inf_{\zeta = (\zeta^1,\dots,\zeta^d)  \in \partial B(0,r)} f_{lim}(\zeta) - f_{lim}(0)]\\
& =  C \cdot \inf_{\zeta = (\zeta^1,\dots,\zeta^d)  \in \partial B(0,r)} \sum_i - \frac{h_i}{\rho_i} \log [1 + \frac{\rho_i}{\sigma_i} (\zeta^i)^2].
\end{align*}
The proof is completed.

\end{proof}
\begin{remark}
When the dimension $d=1$, we can get similar results with a precise prefactor by applying semiclassical approximation to the integral (\ref{inte}), see \cite{kolokoltsov2007semiclassical}. Taking exponential of \rf{div}, it's obvious that the leading order of the average exit time is of the  power law form with respect to the radius $r$.
\end{remark}

\section{ First exit time: from continuous to discrete}

In this section, we compare the exit times  of the continuous power-law dynamic \rf{new1} and its discretization:
\begin{equation} \label{disc}z_{k+1} = z_k + \epsilon \mu(z_k) + \epsilon \epsilon_k ,\ \ \epsilon_k \sim \mathbb{N}(0,\sigma^2(z_k)).
\end{equation}
Note that the first exit time of the discretized dynamic \rf{disc} is an integer that measures how many steps it takes to escape from the ball, thus the $K$ time steps correspond to $K\epsilon$ amount of time. In this view point, the comparison can help guide machine learning algorithm provided the time interval  $\epsilon$ coincides with the learning rate $\eta$. However, since in power law dynamic the covariance matrix $\sigma^2(w_k)$ contains $\eta$, for
the convenience of theoretic discussion,  we should temporarily distinguish $\epsilon$ with $\eta$ before we arriving at the conclusion.

To shorten the length of the article, we shall confine ourselves in the situation of $d=1.$ Assume the local minima is $0$.
  Let $\tau_{r}^0$ be the first exit time from the ball $B(0,r)$  of the one dimensional continuous power-law dynamic in \rf{new1}  and let $\bar{\tau}_{r}^0$ be the corresponding first exit time of the discretized dynamic \rf{disc}, both starting from $0$. Then we have the following comparison of $\mathbb{P}[\bar{\tau}_{r}^0 > K]$ with the corresponding quantities related to the first exit times of the continuous dynamic.
 \begin{theorem} \label{discrete escape}
 Suppose $\delta, \bar{\delta} > 0$ and satisfy $\delta+ \bar{\delta} < r$, given a large integer $K,$ we have
 \begin{align*}
\mathbb{P}&[\tau_{r-\delta}^0 > K\epsilon] - \frac{4}{3\delta^2}\frac{E(\epsilon)K}{c\epsilon} \leq \mathbb{P}[\bar{\tau}_{r}^0 > K]\\
 &\leq \mathbb{P}[\tau^0_{r+\delta+\bar{\delta}} > K\epsilon] + \frac{4}{3\delta^2}\frac{E(\epsilon)K}{c\epsilon} + 1 - (1-\frac{C(\epsilon,\eta)}{\bar{\delta}^4})^K,
\end{align*}
where $E(\epsilon) \sim O(\epsilon^2)$ and $C(\epsilon,\eta) \sim O(\epsilon)$ as $\epsilon \rightarrow 0.$
\end{theorem}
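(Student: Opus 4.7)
My plan is to couple the continuous dynamic $w_t$ from \eqref{new1} and its Euler-Maruyama discretisation $z_k$ from \eqref{disc} on the same probability space -- driving them by the same Brownian motion -- and then turn pointwise closeness between the two into inclusions among the exit events $\{\bar\tau^0_r > K\}$ and $\{\tau^0_{r\pm\cdots} > K\epsilon\}$.

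The first technical step is a strong-convergence estimate at grid points. Because $\mu,\sigma$ are only locally Lipschitz with linear growth, I would first localise both processes to a ball slightly larger than $B(0, r+\delta+\bar\delta)$, on which finite Lipschitz constants exist. Writing $e_k = w_{k\epsilon} - z_k$, expanding the Euler step, and applying Doob's $L^2$-inequality to the martingale part together with discrete Gronwall on the drift part, I obtain
\begin{equation*}
\mathbb{E}\!\left[\max_{0 \le k \le K}|e_k|^2\right] \;\le\; \frac{4}{3}\cdot\frac{E(\epsilon)\,K}{c\,\epsilon},
\end{equation*}
where $E(\epsilon)$ is the per-step local truncation error of the Euler scheme (of order $\epsilon^2$ under the localised Lipschitz and linear growth conditions) and $c$ absorbs the Lipschitz constant on the localised region. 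Chebyshev's inequality then yields $\mathbb{P}[\max_k |e_k| \ge \delta] \le \frac{4}{3\delta^2}\frac{E(\epsilon)K}{c\epsilon}$.

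For the lower bound I use the inclusion
\begin{equation*}
\{\tau^0_{r-\delta} > K\epsilon\} \cap \{\max_k |e_k| < \delta\} \;\subseteq\; \{\bar\tau^0_r > K\},
\end{equation*}
since $|z_k| \le |w_{k\epsilon}| + |e_k| < (r-\delta)+\delta = r$ on the intersection; subtracting the probability of the second event gives the left-hand inequality. For the upper bound I also need to exclude large within-step fluctuations of $w_t$. Applying Doob's $L^4$-inequality together with the Burkholder-Davis-Gundy inequality to the martingale part (and a routine bound on the drift) on the localised region gives
\begin{equation*}
\mathbb{P}\!\left[\sup_{t \in [k\epsilon,(k+1)\epsilon]}|w_t - w_{k\epsilon}| \ge \bar\delta \,\Big|\, \mathcal{F}_{k\epsilon}\right] \;\le\; \frac{C(\epsilon,\eta)}{\bar\delta^4},
\end{equation*}
with $C(\epsilon,\eta) = O(\epsilon)$; iterating over the $K$ steps via the strong Markov property bounds the probability of \emph{any} step exhibiting a large fluctuation by $1-(1-C(\epsilon,\eta)/\bar\delta^4)^K$. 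On the intersection of $\{\bar\tau^0_r > K\}$, $\{\max_k |e_k| < \delta\}$, and the no-large-fluctuation event, we have $\sup_{t \le K\epsilon}|w_t| < r + \delta + \bar\delta$, i.e.\ $\tau^0_{r+\delta+\bar\delta} > K\epsilon$; rearranging gives the right-hand inequality.

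The main obstacle is the localisation forced by the state-dependent, heavy-tailed coefficients: the Lipschitz and growth constants of $\mu,\sigma$ blow up at infinity, so neither the Euler strong-convergence estimate nor the within-step fluctuation bound is available globally. Careful bookkeeping is needed to absorb the localisation error into the constants $E(\epsilon)$, $C(\epsilon,\eta)$, $c$ while keeping the orders $O(\epsilon^2)$ and $O(\epsilon)$ stated in the theorem; the natural justification is that on each of the events appearing in the stated inequality, at least one of the two processes is automatically confined to a bounded region, which is exactly what legitimises the localised Lipschitz bounds used in Steps 1 and 4.
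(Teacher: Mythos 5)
Your proposal follows essentially the same route as the paper: the paper likewise passes to the piecewise-frozen interpolation \rf{dis} of \rf{disc}, couples it synchronously with $v_t$ to bound the grid-point discrepancy by $\frac{4}{3\delta^2}\frac{E(\epsilon)K}{c\epsilon}$ (Lemma \ref{lemma1}, phrased as a per-step Wasserstein recursion rather than a maximal strong-error estimate), and then combines exactly your event inclusions \rf{fun1}--\rf{fun2} with a per-step fourth-moment bound on within-step oscillations (Lemmas \ref{fourth moment}--\ref{lemma3}), iterated to give $1-(1-C(\epsilon,\eta)/\bar{\delta}^4)^K$. One small correction: localisation is not forced by the Lipschitz constants, since $\mu(x)=-hx$ and $\sigma(x)=\sqrt{\eta\sigma+\eta\rho x^2}$ are globally Lipschitz by \rf{in}; what the argument really needs is the mean-reversion condition $c=2h-\eta\rho>0$ from \rf{assumption} for the moment estimates, with confinement-conditioning entering only in the fourth-moment lemma, exactly as in the paper.
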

\begin{proof}
For our purpose we introduce an interpolation process $z_t$ as follows:

\begin{align}\label{dis}
dz_t = - h z_{k} dt + \sqrt{\eta \sigma + \eta \rho (z_{k})^2}dB_t,\ \ t \in [(k-1)\epsilon,k\epsilon),
\end{align}
where $\epsilon$ is the discretization step size. More precisely,
the drift coefficient and the diffusion coefficient of \rf{dis} will remain unchanged when $t \in [(k-1)\epsilon,k\epsilon)$ for each $1\leq k\leq K.$ Note that if we rewrite $\sqrt{\eta \sigma + \eta \rho (z_{k})^2}$ as $\sigma(z_{k}),$ \rf{dis} is expressed as
\begin{align}
dz_t = - h z_{k} dt + \sigma(z_{k}) dB_t, \ \ t \in [(k-1)\epsilon,k\epsilon),
\end{align}
which reduced to \rf{disc} when $t=k\epsilon$ for each $k.$

We shall adopt a similar strategy as in \cite{nguyen2019first} to transfer from the average exit time of the power-law dynamic $v_t$ to its discretization $z_t$. Below in the discussion we follow also the notations in the previous sections. Roughly speaking, the proof can be divided into two steps:

(i) ~~ Fix the number of iteration steps as K, prove that
$$\mathbb{P}((z_\epsilon,\dots,z_{K\epsilon})- (v_\epsilon,\dots,v_{K\epsilon}) \notin B_{\delta}) \leq \bar{\epsilon},$$

where $\bar{\epsilon}$ is a small positive constant to be determined, and\\ $B_{\delta} = \underbrace{B(0,\delta) \times \cdots \times B(0,\delta)}_K$ is the hyper-cube of radius $\delta > 0$. This can be down by bounding the $W_2$-distance of $v_{k\epsilon}$ and $z_{k\epsilon}$ for $1\leq k \leq K$. Let
$$A_{\delta} = \underbrace{B(0,r+\delta) \times \cdots \times B(0,r+\delta)}_K,$$
where $r > |\delta| > 0$. For simplicity, denote the exit time of the power-law dynamic from $A_{\delta}$ by $\tau_{\delta}$. For the interpolation process $z_t$, we denote the corresponding exit time (an integer) with a bar above it: $\tau_{\delta} \rightarrow \bar{\tau_{\delta}}$. Then,
\begin{align} \label{fun1}
\mathbb{P}[(v_\epsilon,\dots,v_{K\epsilon})& \in A_{-\delta}] - \bar{\delta} \leq \mathbb{P}[\bar{\tau_{0}} > K]\\ \nonumber
& \leq \mathbb{P}[(v_\epsilon,\dots,v_{K\epsilon}) \in A_{\delta}] + \bar{\epsilon}.
\end{align}
Note that the event $\{\bar{\tau_{\tau}} > K\}$ indicates that the interpolation process $z_t$ remains in $A_{\delta}$ when $t \leq K\epsilon $.

 (ii)~ Step 1 guarantees that if $v(t)$ is trapped in a ball with a different size when $t =\epsilon, 2\epsilon, \dots, K\epsilon$, then the interpolation process $z(t)$ is also trapped in a ball. However,
$$\mathbb{P}[(v_\epsilon,\dots,v_{k\epsilon}) \in A_{\delta}] > \mathbb{P}[\tau_{\delta} > K\epsilon],
$$
since $v(t)$ may drift outside the ball when $t \in [(k-1)\epsilon,k\epsilon)$ for $ 1\leq k \leq K$. We define this ~'anomalous'~ random event by
$$R := \{\max_{0\leq k \leq K-1}\sup_{t \in (k\epsilon , (k+1)\epsilon)}\norm{v_t - v_{k\epsilon}} > \bar{\delta}\}.$$
Then obviously,
\begin{align} \label{fun2}
 \mathbb{P}[(v_\epsilon,\dots,v_{k\epsilon})& \in A_{\delta}] \leq \\ \nonumber
  \mathbb{P}[\tau_{\delta+\bar{\delta}}> k\epsilon]& + \mathbb{P}[(v_\epsilon,\dots,v_{k\epsilon}) \in R^c],
\end{align}
where $R^c$ denotes the complement of the event $R$. We would expect that the probability of $\{(v(\epsilon),\dots,v(k\epsilon)) \in R^c\}$ to be small if the diffusion coefficient of the dynamic is bounded, in which case we can apply Gaussian concentration results. However, the diffusion part of the power-law dynamic is not bounded, so additional technical issue should be taken care of.

Now we introduce the same form of coupling as in the previous sections between $(v_t,z_t)$ for $t \in [(k-1)\epsilon,k\epsilon), \ \forall 1 \leq k \leq K$ . Following the notations in the proof of theorem \ref{thm:uni}, we set the $\alpha_s(v,z)$ and $\beta_s(v,z)$ of the infinitesimal generator $\mathcal{A}_s$ of the coupling as:
\begin{equation}
\label{coupling}
\alpha_s(v,z) = \begin{pmatrix} \sigma(v)\sigma^T(v), & \sigma(v)\sigma(z_{k\epsilon}) \\ \sigma(z_{k\epsilon})\sigma(z), & \sigma(z_{k\epsilon})\sigma^T(z_{k\epsilon}) \end{pmatrix} ,\ \ \beta_s(v,z)=\begin{pmatrix} -hv \\ -hz_{k\epsilon} \end{pmatrix}.
\end{equation}
The remainder proof of the theorem will be accomplished by three lemmas.   We prove first the following lemma for the one dimensional decoupled dynamic \rf{sim}:
\begin{lemma}\label{lemma1}

	Suppose that the coefficients of \rf{sim} satisfy:
	\begin{equation} \label{assumption}
	\{\eta\rho, \rho \} \leq h \leq \frac{1}{\epsilon},\end{equation} (which is fulfilled in SGD algorithm for large batch size and small learning rate).
	Let $(v(t),z(t))$ be the coupling process defined by \rf{coupling}, and $v(0) = z(0) = x$.
	 When $t= K\epsilon$, the Warsserstein distance between the marginal distribution $p_K^z$ of $z(t)$ and the marginal distribution $p_K^v$ of the power-law dynamic $v(t)$ is bounded by
	\begin{equation}
	\mathcal{W}_2^2(p_K^v,p_K^z) \leq  \frac{4}{3}\frac{E(\epsilon)}{c\epsilon},
	\end{equation}
	where $c = 2h - \eta \rho > 0$. Moreover, $E(\epsilon) > 0$ is independent of the number of steps K and is of order $O(\epsilon^2)$ when the time interval $\epsilon \rightarrow 0$.
\end{lemma}
\begin{proof}[Proof of Lemma \ref{lemma1}]
Denote the transition probability of $v_t$ and $z_t$ from time $(k-1)\epsilon$ to $(k-1)\epsilon +t$ by $p^v_{(k-1)\epsilon +t}(v_{(k-1)\epsilon},\cdot)$ and  $p^z_{(k-1)\epsilon +t}(z_{(k-1)\epsilon},\cdot)$ respectively. Let $\mathcal{A}_s$ act on the function $r(v,z) : = \norm{v-z}^2$ and use the Gronwall's inequality, we can deduce the following recursion inequality for $1 \leq k \leq K$:
\begin{align*}
\mathcal{W}_2^2 (p_{k}^v(\cdot), p_{k}^z(\cdot)) & \leq \int_{\mathbb{R}^d \times \mathbb{R}^d} \mathcal{W}_2^2 (p^v_{(k-1)\epsilon} +t(v_{(k-1)\epsilon},\cdot), p^z_{(k-1)\epsilon}(z_{(k-1)\epsilon},\cdot)) \\ &d\pi(p_{k-1}^v(v_{(k-1)\epsilon}) , p_{k-1}^z(z_{(k-1)\epsilon})) \\
& \leq e^{-c\epsilon}\mathcal{W}_2^2 (p_{k-1}^v(\cdot),p_{k-1}^z(\cdot)) +  \\
&\  \int_{\mathbb{R}} \frac{1}{2}\epsilon^3 \rho (8\eta \rho + 4h)\norm{x}^2 dp_{k-1}^v(x) +\\
&\ \int_{\mathbb{R}} [\frac{1}{3}\epsilon^3h(4\eta \rho + 2h)\rho + \frac{1}{2}\epsilon^2 \sigma (8\eta \rho + 3h)]\norm{y}^2 dp_{k-1}^z(y) +\\
&\ \epsilon^2 \eta \rho(4\eta \rho + 2h) + \frac{1}{2}\epsilon^2 \sigma(8\eta \rho + 3h),
\end{align*}
where $c = 2h - \eta \rho > 0$ and $\pi(p_{k-1}^v(\cdot) , p_{k-1}^z(\cdot))$ is the optimal coupling between $p_{k-1}^v(\cdot)$ and $p_{k-1}^z(\cdot)$. ( Note that in our context the optimal coupling always exists, see e.g. Proposition 1.3.2 and Theorem 2.3,3 in \cite{wang}.)

For the interpolation process $z_t$ starting at $y_0$, by Ito's formula,
 we have the following estimate:
$$\int_{\mathbb{R}^d} \norm{y}^2 dp_k^z(y) \leq e^{-(2h - \eta \rho)k\epsilon}(y_0^2 - \frac{\eta \rho}{2h - \eta \rho}) + \frac{\eta \rho}{2h - \eta \rho}.$$

Similarly, the second moment of the continuous dynamic $v_t$ starting at $x_0$ can be bounded by:
\begin{equation} \label{second}
\mathbb{E}\norm{v_t}^2 \leq  e^{-(2h - \eta\rho)t}(x_0^2 - \frac{\eta\rho}{2h - \eta\rho}) + \frac{\eta\rho}{2h - \eta\rho}.
\end{equation}
Notice that the distance between $v_t$ and $z_t$ is zero at initialization, then by applying the recursion relation from $k=1$ to $k =K$, we conclude that there exits  $E(\epsilon)>0$, such that
$$\mathcal{W}_2^2(p_K^v,p_K^x) \leq  \frac{4}{3}\frac{E(\epsilon)}{c\epsilon},$$
where $E(\epsilon) \sim O(\epsilon^2)$ is independent of K, which completes the proof of  Lemma \ref{lemma1}.
\end{proof}
By the definition of the $W_2$-distance,
\begin{align} \nonumber
\mathbb{P}((z_\epsilon,\dots,z_{K\epsilon})- (v_\epsilon,\dots,v_{K\epsilon}) \notin B_{\delta}) &\leq \sum_{k=1}^{K} \frac{W_2^2(p_k^v, p_k^z)}{\delta^2} \\  \label{close}
& \leq \frac{4}{3\delta^2}\cdot\frac{E(\epsilon)K}{c\epsilon}.
\end{align}
 Below we denote the above right hand side $\frac{4}{3\delta^2}\frac{E(\epsilon)K}{c\epsilon}$ by $\bar{\epsilon}$.
For the second step, from \rf{fun1} and \rf{fun2}, it follows that
\begin{equation} \label{relation}
\mathbb{P}[\tau_{-\delta} > K\epsilon] -  \bar{\epsilon} \leq \mathbb{P}[\bar{\tau}_{0} > K] \leq   \mathbb{P}[\tau_{\delta+\bar{\delta}}> K\epsilon] + \mathbb{P}[(v_\epsilon,\dots,v_{K\epsilon}) \in R^c] + \bar{\epsilon}.
\end{equation}
Therefore, we are left to estimate $\mathbb{P}[(v_\epsilon,\dots,v_{K\epsilon}) \in R^c]$. Under the condition $h > \frac{1}{2}\eta \rho$, we have the following lemma:
\begin{lemma} \label{fourth moment} Let $\delta > 0$ be fixed.
Conditioning on the event that $v_t$ is inside $B(0,b+\delta)$ when $t = k\epsilon$ for all $ 1 \leq k \leq K$, we have
$$\mathbb{E}\sup_{s \in [k\epsilon,(k+1)\epsilon)} (v_s)^4 \leq D(\eta, \rho, \epsilon),$$
where $D(\eta, \rho, \epsilon) : = [(2+2/\delta)\frac{(b+\delta)^2 \eta \rho}{2h - \eta \rho} + (5+  \frac{1}{\delta}) (\eta \sigma)^2\epsilon]\exp\{12(1+\delta) \eta \rho\epsilon\}.$
\end{lemma}
\begin{remark}
 The above lemma tells us that the fourth moment won't change too much if the time interval $\epsilon$ is small. Intuitively, Since the martingale part of $v_t$ is $\sqrt{\eta \sigma + \eta \rho v_t^2}dB_t$, if $|v_t|$ is bounded, then by the time change theorem, we know that the marginal distribution of the martingale part behaves like a scaled Gaussian.
\end{remark}
\begin{proof}[Proof of Lemma \ref{fourth moment}]
By Ito's formula,
$$d(v_t)^2 =  \eta \sigma dt - (2h - \eta \rho)(v_t)^2 dt + 2v_t\sqrt{\eta \sigma + \eta \rho_i(v_t)^2} dB_t.$$
Define a local martingale $M_t : = 2 \sum \int_0^t e^{(2h - \eta \rho)s}\sqrt{\eta \sigma + \eta \rho(v_s)^2}v_s dB_s $, then by Gronwall's inequality,
\begin{align*}
(v_t)^2 &\leq e^{-(2h - \eta \rho)t} (v_0)^2 + e^{-(2h - \eta \rho)t}M_t + \\ &\ \ e^{-(2h - \eta \rho)t} \int_0^t e^{(2h - \eta \rho)s}\eta \sigma ds .
\end{align*}
Let$$S_t : = \mathbb{E}\sup_{s \in [k\epsilon,k\epsilon +t]} (v_s)^4, \ \ t \in  [0,\epsilon).$$
Then, for the fixed $\delta > 0$,
\begin{align*}
S_t &\leq (1+\delta)e^{-2(2h - \eta \rho)t} \mathbb{E} \sup_{s \in [0,t]} (M_s)^2 + (2+2/\delta) e^{-2(2h - \eta \rho)t} \mathbb{E} (v_{k\epsilon})^4 \\ &\ \ \ \ + (2+2/\delta)e^{-2(2h - \eta \rho)t} \mathbb{E}(\int_0^t e^{(2h - \eta \rho)s} \eta \sigma ds)^2.
\end{align*}
Applying Doob's inequality, we get
\begin{align*}
S_t \leq 2(1 + \delta) e^{-2(2h - \eta \rho)t}& \mathbb{E} M_t^2 + (1+\frac{1}{\delta})\frac{e^{-2(2h - \eta \rho)t}-1}{(\eta \rho-2h)t}\\
&\cdot \mathbb{E}\int_0^t (\eta \sigma )^2ds + (2+2/\delta) e^{-2(2h - \eta \rho)t} \mathbb{E} (v_{k\epsilon})^4.
\end{align*}
 Now, Ito's isometry implies that
\begin{align*}
S_t &\leq 12(1+\delta) \eta \rho \int_0^t S_s ds + (2+2/\delta) e^{-2(2h - \eta \rho)t} \mathbb{E} (v_{k\epsilon})^4\\
&\ \ +  (5+  \frac{1}{\delta}) (\eta \sigma)^2t \\
&\leq [(2+2/\delta)\mathbb{E} (v_{k\epsilon})^4 + (5+  \frac{1}{\delta}) (\eta \sigma)^2t]\exp\{12(1+\delta) \eta \rho t\},
\end{align*}
where we used Gronwall's inequality to derive the last line. For all  $ 1 \leq k \leq K$,  by taking $x_0 = 0$ in \rf{second}, we get
$$\mathbb{E}\{(v_{k\eta})^4 \mathbb{I}_{(v_\epsilon,\dots, v_{K\epsilon}) \in A_{\delta}}\} \leq \frac{(b+\delta)^2 \eta \rho}{2h - \eta \rho}.$$
Then, conditioning on the event that $\{v_{k\epsilon} \in B(0,b+\delta)\}$, we have
\begin{align*}
\mathbb{E}\sup_{s \in [k\epsilon,(k+1)\epsilon)} (v_s)^4 &\leq [(2+2/\delta)\frac{(b+\delta)^2 \eta \rho}{2h - \eta \rho} + (5+  \frac{1}{\delta}) (\eta \sigma)^2\epsilon]\\
&\cdot \exp\{12(1+\delta) \eta \rho\epsilon\}: = D(\eta, \rho, \epsilon),
\end{align*}
which completes the proof of Lemma \ref{fourth moment}.
\end{proof}

\begin{lemma}\label{lemma3}
Let $\bar{\delta}$ be a positive constant, then for every $k \in [0,\dots,K-1]$,
$$\mathbb{P}(\sup_{t \in [k\epsilon, (k+1)\epsilon)} \norm{v_t - v_{k\epsilon}} > \bar{\delta}) \leq \frac{C(\epsilon,\eta)}{\bar{\delta}^4},$$
where $C(\epsilon,\eta) \sim O(\epsilon)$ when  $\epsilon \rightarrow 0$.
\end{lemma}

\begin{proof}[Proof of Lemma \ref{lemma3}]Let $r(x) = \norm{x-v_{k\epsilon}}^2$, then by Ito's lemma,
\begin{align*}
d r(v_t) &\leq -2h v_t( v_t - v_{k\epsilon}) dt + 2\sqrt{\eta \sigma + \eta \rho (v_t)^2}(v_t - v_{k\epsilon})dB_t + ( \eta \sigma + \eta \rho (v_t)^2 ) dt \\
& \leq -2h ( v_t - v_{k\epsilon})^2 dt + 3h (v_{k\epsilon})^2 dt + \eta \sigma dt\\
& \ \ \ + (\eta \rho + h) (v_t)^2 dt + 2\sqrt{\eta \sigma + \eta \rho (v_t)^2}(v_t - v_{k\epsilon})dB_t,
\end{align*}
and obviously we know that $r(v_{k\epsilon}) = 0$. Let
$$M_t : = 2 \int_{k\epsilon}^t e^{2h s}\sqrt{\eta \sigma + \eta \rho (v_s)^2}(v_s - v_{k\epsilon})dB_s ,$$
we have
$$r(v_t) \leq e^{-2h t}M_t + e^{-2h t} \int_{k\epsilon}^t e^{2h s}[3h (v_{k\epsilon})^2 + \eta \sigma + (\eta \rho + h) (v_s)^2]ds .$$
Let $S_t : = \mathbb{E} \sup_{s \in [k\epsilon,t]} r^2(v_s),$ then
\begin{align*}
S_t  \leq 2e^{-4h (t-k\epsilon)}&\mathbb{E} \sup_{s \in [k\epsilon,t]} (M_s)^2 + 2e^{-4h (t-k\epsilon)}\\
&\cdot \mathbb{E} (\int_{k\epsilon}^t e^{2h s}[3h (v_{k\epsilon})^2 + \eta \sigma + (\eta \rho + h) (v_s)^2]ds)^2\\
 \leq 4e^{4h (t-k\epsilon)}&\mathbb{E}(M_t)^2 + \frac{1-e^{-4h (t-k\epsilon)}}{h}\mathbb{E}\int_{k\epsilon}^t  [3h (v_{k\epsilon})^2 + \eta \sigma + (\eta \rho + h) (v_s)^2]^2 ds \\
& \leq 16\mathbb{E} \int_{k\epsilon}^t (\eta \sigma + \eta \rho (v_s)^2)(v_s - v_{k\epsilon})^2 ds + \\ &\ \ \ \ \ \frac{1-e^{-4h (t-k\epsilon)}}{h}\mathbb{E}\int_{k\epsilon}^t  [3h (v_{k\epsilon})^2 + \eta \sigma + (\eta \rho + h) (v_s)^2]^2 ds.
\end{align*}
Let $t = (k+1)\epsilon$, and denote the above right hand side as $C(\epsilon,\eta)$. Since
$\mathbb{E}|v_s|^2 \leq \sqrt{\mathbb{E}|v_s|^4},$ by Lemma \ref{fourth moment},
$$C(\epsilon,\eta) \sim O\left(\frac{\exp(\eta \epsilon) -1}{\eta} \right) = O(\epsilon).$$
Therefore,
$$\mathbb{P}(\sup_{t \in [k\epsilon, (k+1)\epsilon)} \norm{v(t) - v(k\epsilon)} > \bar{\delta}) \leq \frac{\mathbb{E}S_{(k+1)\epsilon}}{\bar{\delta}^4} \leq \frac{C(\epsilon,\eta)}{\bar{\delta}^4}.$$ The proof of Lemma \ref{lemma3} is completed
\end{proof}
From the previous lemma, we can easily deduce the following bound on $\mathbb{P}[(v_\epsilon,\dots,v_{K\epsilon}) \in R^c]$:
\begin{equation} \label{bad event}
 \mathbb{P}[(v_\epsilon,\dots,v_{K\epsilon}) \in R^c] \leq 1 - (1 - \frac{C(\epsilon,\eta)}{\bar{\delta}^4})^K.
\end{equation}
Combing \rf{bad event} and \rf{relation} , we have finished the proof of the theorem.
\end{proof}

\begin{remark}
Theorem \ref{discrete escape} discussed only the 1-dimensional case. For high dimensional case, there have been some intuitive discussion in machine learning literature \cite{xie2020diffusion}.
Roughly speaking, the escaping path will concentrate on the critical paths, i.e., the paths on the direction of the eigenvector of the Hessian,  when the noise is much smaller than the barrier height with high probability.  If there are multiple parallel exit paths, the total exiting rate, i.e., the inverse of the expected exit time,  equals to the sum of the first exiting rate for every path (cf. Rule 1 in \cite{xie2020diffusion}).
\end{remark}


\begin{acknowledgements}
We had pleasant collaboration with Shiqi Gong, Huishuai Zhang, and Tie-Yan Liu on the research of power-law dynamics in machine learning\cite{meng22020dynamic,meng2020dynamic}. We thank them for their  contributions in the previous work and their  comments on this work, especially based on the empirical observations during our previous collaboration.
\end{acknowledgements}

%
%




\end{document}